




\documentclass[sigconf]{aamas} 



\usepackage{balance} 


\usepackage{amsmath,amsfonts,bm}









\def\eqref#1{equation~\ref{#1}}









\def\1{\bm{1}}








\def\va{{\bm{a}}}

\def\ve{{\bm{e}}}

\def\vm{{\bm{m}}}

\def\vs{{\bm{s}}}

\def\vx{{\bm{x}}}

\def\vz{{\bm{z}}}



\DeclareMathAlphabet{\mathsfit}{\encodingdefault}{\sfdefault}{m}{sl}
\SetMathAlphabet{\mathsfit}{bold}{\encodingdefault}{\sfdefault}{bx}{n}











\newcommand{\E}{\mathbb{E}}



\usepackage{amsmath,amsthm}
\usepackage{algorithm}
\usepackage{algorithmic}
\usepackage{multirow}
\usepackage{subcaption}
\usepackage{enumitem}
\sloppy

\newtheorem{proposition}{Proposition}
\newcommand{\ours}{VTE} 

\usepackage{cleveref} 
\crefname{section}{Section}{Sections}
\crefname{theorem}{Theorem}{Theorems}
\crefname{corollary}{Corollary}{Corollaries}
\crefname{lemma}{Lemma}{Lemmas}
\crefname{equation}{Eq.}{Equations}
\crefname{proposition}{Proposition}{Propositions}
\crefname{claim}{Claim}{Claims}
\crefname{remark}{Remark}{Remarks}
\crefname{observation}{Observation}{Observations}
\crefname{assumption}{Assumption}{Assumptions}
\crefname{definition}{Definition}{Definitions}
\crefname{appendix}{Appendix}{Appendices}
\crefname{algorithm}{Algorithm}{Algorithms}
\crefname{figure}{Figure}{Figures}
\crefname{table}{Table}{Tables}

\usepackage{booktabs}



\makeatletter
\gdef\@copyrightpermission{
  \begin{minipage}{0.2\columnwidth}
   \href{https://creativecommons.org/licenses/by/4.0/}{\includegraphics[width=0.90\textwidth]{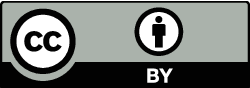}}
  \end{minipage}\hfill
  \begin{minipage}{0.8\columnwidth}
   \href{https://creativecommons.org/licenses/by/4.0/}{This work is licensed under a Creative Commons Attribution International 4.0 License.}
  \end{minipage}
  \vspace{5pt}
}
\makeatother

\setcopyright{ifaamas}
\acmConference[AAMAS '25]{Proc.\@ of the 24th International Conference
on Autonomous Agents and Multiagent Systems (AAMAS 2025)}{May 19 -- 23, 2025}
{Detroit, Michigan, USA}{Y.~Vorobeychik, S.~Das, A.~Nowé  (eds.)}
\copyrightyear{2025}
\acmYear{2025}
\acmDOI{}
\acmPrice{}
\acmISBN{}



\acmSubmissionID{1273}


\title[AAMAS-2025 Formatting Instructions]{On Learning Informative Trajectory Embeddings for\\ Imitation, Classification and Regression}


\author{Zichang Ge$^*$}
\affiliation{
  \institution{Singapore Management University}
  \country{Singapore}
}
\email{zichang.ge.2023@phdcs.smu.edu.sg}

\author{Changyu Chen$^*$}
\affiliation{
  \institution{Singapore Management University}
  \country{Singapore}
}
\email{cychen.2020@phdcs.smu.edu.sg}

\author{Arunesh Sinha}
\affiliation{
  \institution{Rutgers University}
  \country{New Brunswick, NJ, USA}
}
\email{arunesh.sinha@rutgers.edu}

\author{Pradeep Varakantham}
\affiliation{
  \institution{Singapore Management University}
  \country{Singapore}
}
\email{pradeepv@smu.edu.sg}

\begin{abstract}
In real-world sequential decision making tasks like autonomous driving, robotics, and healthcare, learning from observed state-action trajectories is critical for tasks like imitation, classification, and clustering. For example, self-driving cars must replicate human driving behaviors, while robots and healthcare systems benefit from modeling decision sequences, whether or not they come from expert data. Existing trajectory encoding methods often focus on specific tasks or rely on reward signals, limiting their ability to generalize across domains and tasks.

Inspired by the success of embedding models like CLIP and BERT in static domains, we propose a novel method for embedding state-action trajectories into a latent  space that captures the skills and competencies in the dynamic underlying  decision-making processes. This method operates without the need for reward labels, enabling better generalization across diverse domains and tasks. Our contributions are threefold: (1) We introduce a trajectory embedding approach that captures multiple abilities from state-action data. (2) The learned embeddings exhibit strong representational power across downstream tasks, including imitation, classification, clustering, and regression. (3) The embeddings demonstrate unique properties, such as controlling agent behaviors in IQ-Learn and an additive structure in the latent space. Experimental results confirm that our method outperforms traditional approaches, offering more flexible and powerful trajectory representations for various applications. Our code is available at \url{https://github.com/Erasmo1015/vte}.

\end{abstract}



\keywords{Representation Learning; Sequential Decision Making}


         
\newcommand{\BibTeX}{\rm B\kern-.05em{\sc i\kern-.025em b}\kern-.08em\TeX}


\begin{document}


\pagestyle{fancy}
\fancyhead{}


\maketitle

\def\thefootnote{*}\footnotetext{Equal Contribution.}

\section{Introduction}

Learning from state-action trajectories is a key requirement in sequential decision-making tasks, driving applications such as imitation, classification, regression, and clustering. For instance, autonomous vehicles need to mimic human driving behavior in scenarios like lane merging, while in robotics, trajectory learning is essential for replicating complex manipulation tasks. Traditional representation learning methods, though successful in static fields like computer vision (e.g., CLIP~\cite{radford2021learning}) and natural language processing (e.g., BERT~\cite{devlin2018bert}), often struggle to generalize to these sequential settings. In dynamic environments where trajectories unfold over time and reward signals may not always be present, the challenges of learning effective representations become more pronounced. This limitation raises a crucial question: 
\begin{center} \textit{How can we learn informative trajectory embeddings that capture the dynamic decision-making processes driving these trajectories?}
\end{center}

Although prior work, such as goal-conditioned learning~\cite{ajay2022conditional} has explored state-action trajectory representation, these methods rely on external labels such as goals or rewards, limiting their applicability across diverse domains. Most other works of representation learning in MDP~\cite{yang2021representation, parisi2022unsurprising, nair2022r3m, xiao2022masked, chen2021decision, liu2022masked, carroll2022uni, wu2023masked} focus on state representation learning, losing information of action sequences. A related topic is the work on skill (or options) extraction~\cite{hausman2018learning} from trajectories; however, skills or options capture information only about sub-trajectories. We find that a naive average of skills found in a trajectory does not provide informative embedding of the trajectory and thus cannot reach the return of the demonstrations
({\color{black}see ablation experiments in~\cref{sec:ablation}}).



In this work, we propose a novel approach that learns {\color{black}the informative embeddings of state-action trajectories}. Our approach has two stages. First, we use a skill extractor designed for sequential decision making inspired by~\citet{jiang2022learning}. We leverage Hierarchical State Space Models (HSSM) to extract a probability distribution of multiple possible skills from the trajectory. Next, this skill distribution is input into a shallow transformer and trained with a Variational autoencoder (VAE)~\cite{kingma2013auto} loss. This setup outputs a latent ability vector encapsulating the trajectory’s ability level. The process resembles a VAE, where the trajectory passes through a bottleneck, retaining key information for the decoder to reconstruct the trajectory.

Our approach provides several key advantages over previous methods. First, our approach does not require external labels, such as rewards or goal conditions, which are typically necessary in works like~\citet{zeng2024goal}. Second, our method encapsulates the entire trajectory's information without any extra labels such as task~\cite{hausman2018learning} or rewards~\cite{ajay2022conditional}. Third, our method can effectively learn from a dataset of trajectories generated by diverse policies, extracting latent representations that distinguish the trajectories generated by different policies in the latent space. Last but not least, our latent ability vectors enable a variety of downstream tasks including \emph{conditional imitation} learning that recovers the diverse policies that generated the dataset of trajectories, classification of trajectories, and regression tasks to predict the return.

\begin{figure*}[ht]
  \centering
  \includegraphics[width=0.95\linewidth]{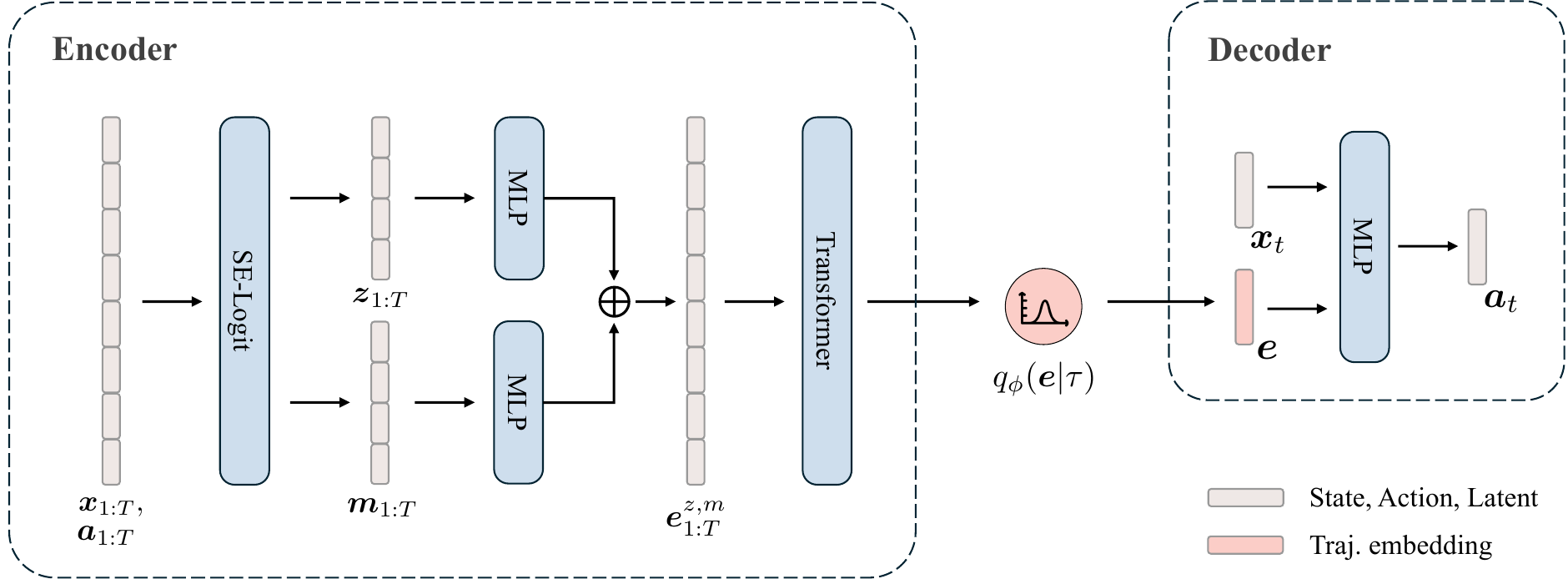}
  \caption{\textbf{Illustration of \ours{} Framework.} For the encoder, by exploiting the pretrained SE-Logit from~\cref{sec:love}, we extract the skill variable $\vz_{1:T}$ and the boundary variable $\vm_{1:T}$. These are then passed through separate MLPs, mapping $\vz_{1:T}$ and $\vm_{1:T}$ to $\ve^z_{1:T}$ and $\ve^m_{1:T}$, respectively, which are of equal size. At each time step, we concatenate these embeddings, resulting in $\ve^{z,m}_{1:T}$, where $\ve^{z,m}_{i}=\textrm{Concat}(\ve^z_i, \ve^m_i)$. $\ve^{z,m}_{1:T}$ is then fed into a transformer to compute the posterior $q_\phi(\ve|\tau)$. For the decoder, the action $\va_t$ is predicted from the state $\vx_t$, conditioned on the trajectory embedding.}
  \label{fig:main}
\end{figure*}

Our experimental results highlight several important characteristics of the latent ability vector. Our experiments demonstrate that previous baselines struggle to generate meaningful representations from the dataset of trajectories generated by diverse policies, resulting in poor recovery of the diverse policies that generated these trajectories. We show similar comparison for classification, clustering, and regression. Lastly, we demonstrate its strong representational power by showing that perturbing different dimensions of the trajectory embedding vector leads to distinct behavioral changes in agents.

To summarize, our main contributions include:

\begin{itemize}[nosep, itemsep=0.3em, left=5pt]
    \item \textbf{Unsupervised trajectory encoding.} We present a method that effectively extracts informative embeddings of state-action trajectories without any reward/goal labels. Our approach hinges on two key design choices: (1) learning trajectory embeddings through skill abstraction (representation of sub-trajectories), and (2) leveraging a transformer to capture the temporal nature of the skill sequence.
    \item \textbf{Diverse downstream tasks.} Our learned ability vector demonstrates strong representation across tasks like data generation (via imitation learning with ability embedding), classification, clustering, and regression.
    \item \textbf{Disentangled representation.} Our ability embedding shows interesting properties such as different dimensions of ability vectors controlling different behaviors of the agent in the conditional imitation learned policy, and an intuitive distance structure in the trajectory embedding space.
\end{itemize}

\section{Related work}
{\color{black}


\textbf{Representation Learning in MDP}\quad Representation learning in Markov Decision Processes (MDPs) is centered on extracting meaningful features from unlabelled trajectories to enhance performance in downstream tasks. Prior works primarily focus on either learning state representations~\cite{yang2021representation, parisi2022unsurprising, nair2022r3m, xiao2022masked, chen2021decision, liu2022masked, carroll2022uni, wu2023masked, laskin2020curl} or constructing world models~\cite{hafner2020mastering, hansen2022modem, seo2023masked, janner2021offline, ding2020mutual, nguyen2021temporal}. Recent advancements, such as GCPC~\cite{zeng2024goal}, expand the focus from sub-trajectory representations to embeddings in trajectory space. This work leverages sequence models like GPT and BERT to encode trajectories, with the resulting trajectory representations utilized to improve subsequent policy learning in offline reinforcement learning (RL) settings. However, these approaches still depend on reward (goal) labels, which differentiates them from our proposed method (which does not require reward/goal labels).

We develop a continuous state-action trajectory embedding in an unsupervised setting. This approach allows us to learn compact information representations that holistically capture the inherent policy behavior patterns present in the trajectories, in a broader context without the reliance on labeled data.

\noindent\textbf{Trajectory Embedding}\quad
There are several existing methods that embed trajectories. Grover et al. \cite{grover2018learning} proposed learning the trajectory embeddings using constrastive learning; however, their approach relies on labeled trajectories, whereas ours does not. Tangkaratt et al. \cite{tangkaratt2020variational} and related methods target learning the expert policies from the diverse-quality demonstrations but do not involve learning trajectory-level representations.
Other methods focus on encoding the state trajectory rather than encoding state-action trajectory. Hawke et al. \cite{hawke2020urban} obtains the embeddings through optical flows and some other sources to improve the model performance in autonomous driving. Gavenski et al. \cite{gavenski2024explorative} utilizes the path signatures to automatically encode the constraints.

\noindent\textbf{Imitation Learning}\quad 
Behavioral cloning (BC) is a basic offline imitation learning method that replicates expert actions without leveraging dynamics information. Advanced methods include GAIL~\cite{ho2016generative} and its variants~\cite{kostrikov2018discriminator, fu2017learning, baram2016model}, which optimize policies using a GAN-like~\cite{goodfellow2014generative} framework to learn from expert demonstrations in an online setting. In addition, offline imitation learning works ValueDICE~\cite{kostrikov2019imitation} and its variants~\cite{arenz2020non, jarrett2020strictly, chan2021scalable} focus on dynamics-aware approaches to minimize KL-divergence and integrate SAC updates. 

Imitation Learning (IL) and Inverse Reinforcement Learning (IRL) techniques often assume optimal demonstrations~\cite{abbeel2004apprenticeship, chen2020joint, gombolay2016apprenticeship}. The assumption often fails in many real-world scenarios. Kaiser et al. \cite{kaiser1995obtaining} analyze five sources of suboptimality, including unnecessary or incorrect actions and limited demonstration scenarios.

Thus, many works~\cite{burchfiel2016distance, brown2019extrapolating, brown2020better} have utilized sub-optimal demonstrations to learn an optimal policy. Some~\cite{burchfiel2016distance} utilizes ranking information among trajectories as a supervision signal, though this can be costly and error-prone. Other approaches include pre-labeling demonstrations as expert or non-expert~\cite{valko2013semi}, using crowd-sourced data with confidence scores~\cite{wu2019imitation}, and bootstraps from sub-optimal demonstrations to synthesize optimality-parameterized data~\cite{chen2021learning}.

However, a significant gap remains in imitating behaviors across different levels of optimality, or ``ability levels,'' whether sub-optimal or optimal. When dealing with mixtures of demonstrations of varying quality, the underlying behavior patterns that generated these demonstrations are often overlooked. Few works like Behavior Transformer (BeT)~\cite{shafiullah2022behavior} learn a multi-modal policy, allowing it to reconstruct different modes of behaviors. But the policy is parameterized by mixture of Gaussian rather than Gaussian prior, without an informative representation to represent the trajectory.
To address this, our approach reconstructs behavior patterns at different ability levels by learning continuous trajectory embeddings in an unsupervised manner. These embeddings, which capture the ability levels, facilitate various downstream tasks, including imitation of diverse policies, trajectory classification, and disentangled representation of behaviors.
}



\section{Preliminaries}

{\color{black}
\textbf{Problem setting}\quad We consider environments represented as a Markov decision process (MDP), which is defined by a tuple ($\mathcal{X}, \mathcal{A}, p_0, {P}, r, \gamma)$.
$\mathcal{X}, \mathcal{A}$ denote state and action spaces, $p_0$ and ${P}(\vx'|\vx, \va)$ represent the initial state distribution and the dynamics. {\color{black} The reward function is $r(\vx, \va) \in \mathbb{R}$, and $\gamma \in (0, 1)$ is the discount factor.} $\Pi$ denotes the set of all stationary stochastic policies that map states in $\mathcal{X}$ to actions in $\mathcal{A}$. 

We assume access to an offline dataset, $\mathcal{D}=\{\tau_i\}_{i=1}^N$, where each trajectory consists of a sequence of states $\vx\in \mathcal{X}$ and actions $\va\in\mathcal{A}$: $\tau_i=\{(\vx_0,\va_0), (\vx_1,\va_1), (\vx_2,\va_2), ...\}$. The trajectories are assumed to be generated by a policy that is conditional on an ability level, $\ve$: 
\begin{align}\label{eq:gen}
    \vx_0\sim p_0, \ve\sim p(\ve), \va_t\sim\pi(\va_t|\vx_t, \ve), \vx_{t+1}\sim P(\vx_{t+1}|\va_t,\vx_t)
\end{align}
$\ve$ is unobserved and it  determines the specific policy $\pi(\cdot|\vx,\ve)$ used to generate the trajectory. Our goal is to learn this latent variable, $\ve_{\tau}$, also referred to as trajectory embedding for any given trajectory, $\tau$. We utilize an offline dataset $\mathcal{D}$ of trajectories to train our mechanisms. We further show that such informative representations of the trajectory enable various downstream tasks, including policy imitation $\pi(\cdot | \vx, \ve)$, trajectory classification, and regression on the trajectory’s return. 

}




\noindent\textbf{Variational Autoencoders~\cite{kingma2013auto}}\quad 
{\color{black}
Our approach that learns the latent representation of the trajectory is inspired by the well known Variational AutoEncoder (VAE) framework. VAE has an encoder-decoder architecture, where the encoder (a neural network with weights $\phi$) learns a probability distribution on a latent space given an input data point. During training, the decoder (a neural network with weights $\theta$) is used to reconstruct the input given the latent space encoding.
The learning happens by maximizing the evidence lower bound (ELBO) of the intractable log-likelihood $\log p(\tau)$:
\begin{align}\label{eq:vae}
\log p(\tau) \geq -D_{KL}(q_\phi(\ve | \tau) \| p(\ve)) + \mathbb{E}_{q_\phi(\ve \mid \tau)}[\log p_\theta(\tau | \ve)] \textrm{,}
\end{align}
\noindent where $\ve$ is a sample in the latent space from the approximate posterior distribution $q_\phi(\ve | \tau)$. In the first term, the prior $p$ is chosen as standard Normal distribution. The second term corresponds to a reconstruction of an observed sample generated with the likelihood $ p(\tau | \ve) $. A common choice for the approximate posterior $ q $ is a Gaussian distribution, $ \mathcal{N}(\boldsymbol{\mu}, \Sigma) $, where $ \boldsymbol{\mu} $ and $ \Sigma $ are outputs of the encoder $q_\phi$ network (as described in~\cite{kingma2013auto}). 
Once trained, we can draw samples in the latent space and the decoder can generate samples in the space of observations. 

}

\noindent\textbf{Skill extraction via compression~\cite{jiang2022learning}}\quad 
{\color{black}
Skill, or options learning, derives higher-level abstractions from state-action sequences, which can help compress the entire sequence. The Learning Options via Compression (LOVE) approach \cite{jiang2022learning} has proven effective by modeling state-action sequences as a generative process that relies on specific latent (unobserved) variables at each time step.

In this process, the latent skill variable $\vz \in \mathcal{Z}$ represents the skill used at a given time step, where $\mathcal{Z}$
 is the set of possible skills. Another latent variable, $\vm \in \{0,1\}$ indicates whether a new skill starts (1) or the current skill continues (0). At each step,  $\vm$ and $\vz$ influence the hidden state $\vs$, which in turn affects the observed state $\vx$. Over time, $\vz$ is influenced by the current $\vm$ and the previous skill, while $\vm$ depends on the previous state. For a detailed explanation of the graphical model, we refer readers to the LOVE paper.

Mathematically, the generative process for the action $\va_{1:T}$ conditional on the observation $\vx_{1:T}$ is: 
\begin{align}\label{eq.graphic_model}
&p\left(\vz_{1:T}, \vs_{1:T}, \vm_{1:T}, \va_{1:T} \mid \vx_{1:T}\right) =   \\
&\hspace{0.2in} \prod_{t=1}^T p\left(\va_t \mid \vs_t\right) p\left(\vm_t \mid \vs_{t-1}\right) p\left(\vs_t \mid \vx_t, \vz_t\right) p\left(\vz_t \mid \vx_t, \vz_{t-1}, \vm_{t-1}\right) \nonumber
\end{align}

The skill learning in LOVE is achieved by maximizing the likelihood of the sequences while penalizing the description length of the skills. 
Due to intractability of the above likelihood, the authors introduce a variational distribution: 
\begin{align}\label{eq:var_dist}
& q_{\phi}  \left(\vz_{1:T}, \vs_{1:T}, \vm_{1:T} \mid \vx_{1:T}, \va_{1:T}\right) = 
  \\
&\quad \prod_{t=1}^T q_{\phi}\left(\vm_t \mid \vx_{1:t}\right) q_{\phi}\left(\vz_t \mid \vz_{t-1}, \vm_t, \vx_{1:T}, \va_{1:T}\right)  q_{\phi}\left(\vs_t \mid \vz_t, \vx_t\right) \nonumber
\end{align}

\noindent Overall, this yields a model with 3 learned components: 1) A \emph{state abstraction posterior} $q_{\phi}(\vs_t | \vz_t, \vx_t)$;
2) A \emph{termination policy} $q_{\phi}(\vm_t | \vx_{1:t})$ that decides if the previous skill ends; 3) A \emph{skill posterior} $q_{\phi}(\vz_t | \vz_{t - 1}, \vm_t, \vx_{1:T}, \va_{1:T})$ to determine the current skill $\vz_t$. 

LOVE penalizes the description length of the skills to improve the quality of the learned skills. The description length of the skills is measured by: 
\begin{equation*}
    \textsc{InfoCost}(\phi; p_{\vz}) = -\E_{\substack{\tau_{1:T},\\\vm_{1:T},\\\vz_{1:T}}}\left[\sum_{t=1}^T \log p_{\vz}(\vz_t) \vm_t\right]\textrm{.}
\end{equation*}

\noindent Combining \textsc{InfoCost} with maximal likelihood objective, the authors propose to solve the following optimization problem: 

\begin{align}\label{eq:love_obj}
    \min _{\phi, p_\vz} \textsc{InfoCost}(\phi; p_{\vz})\quad \text { s.t. } \mathcal{L}_{\mathrm{ELBO}}(\phi) \leq C
\textrm{,}
\end{align}

\noindent where $\mathcal{L}_{\mathrm{ELBO}}(\phi)$ is the negated evidence lower bound of the likelihood defined by {\color{black}\cref{eq.graphic_model}} (detailed description of $\mathcal{L}_{\mathrm{ELBO}}(\phi)$ can be found in Appendix B of~\citet{jiang2022learning}). Once solved, one can infer the skill variables $\vz_{1:T}$ and boundary variables $\vm_{1:T}$ given the trajectory of observations $\boldsymbol{x}_{1:T}$. $\vz_{1:T}$, alone with $\vm_{1:T}$, provides all the information about learned skills.

}

\noindent\textbf{Imitation Learning}\quad
{\color{black} 
Imitation learning (IL) aims to learn a policy for performing a task based solely on expert demonstrations, which consist of state-action trajectories without any reinforcement signals~\cite{ho2016generative}. IQ-Learn~\cite{garg2021iq} has been proposed as an efficient and robust imitation learning algorithm that learns a single Q-function, implicitly capturing both reward and policy. The theoretical framework of IQ-Learn builds on the classic inverse reinforcement learning (IRL) objective: 

\begin{equation*}
\max_{r \in \mathcal{R}} \min_{\pi \in \Pi} L(\pi, r) = \mathbb{E}_{\rho_E}[r(\vs, \va)] - \mathbb{E}_{\rho_\pi}[r(\vs, \va)] - H(\pi) - \psi(r)\textrm{,}
\end{equation*}

\noindent where $\rho_E$ and $\rho_\pi$ denote the occupancy measures of the expert policy and the learned policy, respectively, $r$ represents a learnable reward function, $H(\pi)$ refers to the entropy of policy $\pi$, and $\psi$ is a convex reward regularizer. 

The authors showed that this objective can be achieved by only maximizing the $Q$ function in the following objective: 

\begin{align*}
\mathcal{J}(\pi, Q) &= \mathbb{E}_{\rho_E}\left[f\left(Q - \gamma \mathbb{E}_{\vs^{\prime} \sim {P}(\cdot \mid \vs, \va)} V^\pi\left(\vs^{\prime}\right)\right)\right] 
\\ &\quad 
-(1-\gamma) \mathbb{E}_{\vs_0\sim\rho_0}\left[V^\pi\left(\vs_0\right)\right]\textrm{,}
\end{align*}
\begin{align*}
    V^\pi(\vs) = \mathbb{E}_{\va \sim \pi(\cdot | \vs)} \left[ Q(\vs, \va) - \log \pi(\va | \vs) \right]\textrm{,}
\end{align*}

\noindent where $f$ is a concave function associated with the choice of the reward regularizer $\psi$, and $\rho_0$ represents the initial state distribution. For a fixed $Q$, the soft actor-critic (SAC)~\cite{haarnoja2018soft} update: $\max_\pi \mathbb{E}_{\vs \sim \beta, \va \sim \pi(\cdot | \vs)}[Q(\vs, \va)-\log \pi(\va | \vs)]$, brings $\pi$ closer to $\pi_Q$. Here, $\beta$ refers to the distribution of previously sampled states. 

Although imitation learning algorithms such as IQ-Learn are typically used to learn a policy from the expert demonstrations, in~\cref{sec:method}, we discuss how these methods can, in principle, learn from demonstrations of varying quality, including non-expert data. Empirically, we show that IQ-Learn can be applied to a mixture of expert and non-expert data using the {\color{black} trajectory embedding} inferred by our approach. 
}




\section{Methodology}\label{sec:method}
Our approach has two stages, as illustrated in Figure~\ref{fig:main}: Section~\ref{sec:love} covers skill extraction and Section~\ref{sec:vte} explains transformer usage and VAE style learning.

\subsection{Learning Skills via Compression}
\label{sec:love}

Our main idea is based on the assumption that skills (which are temporal abstractions of the trajectory) naturally combine detailed, step-by-step information and capture the patterns within segments of the trajectory. This suggests that learning an embedding for the entire trajectory by focusing on the skill space (i.e., the space of these higher-level skills) would be easier than trying to learn it directly from the raw state-action space. To achieve this, we utilize the skill learning technique called LOVE, as introduced in the work by \citet{jiang2022learning}.

By solving the optimization problem of~\cref{eq:love_obj}, we can readily acquire the latent variable values, $\vz_{1:T}$, skill change variable values, $\vm_{1:T}$, the learned termination policy, $q_{\phi}(\vm_t | \vx_{1:t})$, and skill posterior, $q_{\phi}(\vz_t | \vz_{t - 1}, \vm_t, \vx_{1:T}, \va_{1:T})$ (refer to~\cref{eq:var_dist}). For brevity, we denote this process by $\vz_{1:T}, \vm_{1:T}=\textrm{SE}_\phi(\vx_{1:T}, \va_{1:T})$, where $\textrm{SE}_\phi$ is named skill extractor. 

However, we noticed that the skills we sampled were not providing enough useful information to recover the trajectories. To address this and make the most of the skill knowledge without losing any detail, we capture the full information about the distributions of $\vz_t$(the skill) and $\vm_t$ (the indicator for whether a new skill starts). In the LOVE model, $\vz_t$ is sampled from a categorical distribution, and $\vm_t$ is sampled from a Bernoulli distribution.

Instead of just working with these samples, we aim to capture more detail by using the logit vectors that describe the underlying distributions. Specifically, we represent $\vz_t$ as a vector in $\mathbb{R}^l$ (real-valued space of length l) and  $\vm_t$  as a vector in $\mathbb{R}^2$. This approach, with a slight abuse of notation, allows us to work with the full distributions rather than just the samples. We denote this process as $\vz_{1:T}, \vm_{1:T}=\textrm{SE-Logit}_\phi(\vx_{1:T}, \va_{1:T})$, meaning the logits are derived from the input state and action sequences.


\subsection{Variational Trajectory Encoding} \label{sec:vte}
One naive way of producing trajectory embeddings is to apply the mean pooling operation on $\vz_{1:T}$:
\begin{align}\label{eq:pooling}
    \ve = \frac{1}{T}\sum_{i=1}^T \vz_i\textrm{.}
\end{align}
\noindent However, we find this embedding cannot capture information at the trajectory level as it only contains information about the skills that are present in the trajectory. Thus, we propose to learn a more informative embedding by combining not just skills but also the ordering of those skills in the trajectory. We employ a VAE to compute this embedding.  Starting from~\cref{eq:vae}, we show how to construct the required terms in the VAE loss. We first describe how to obtain the encoding part of VAE, i.e., computing $q_\phi(\ve|\tau)$, which is represented using a normal distribution, $ \mathcal{N}(\boldsymbol{\mu}, \Sigma) $.

{\color{black}
\begin{itemize}[nosep, itemsep=0.3em, left=5pt]
    \item Obtain $\vz_{1:T}, \vm_{1:T}=\textrm{SE-Logit}_\phi(\vx_{1:T}, \va_{1:T})$
    \item Perform $\ve^z_{1:T}=\textrm{MLP}(\vz_{1:T})$ and $\ve^m_{1:T}=\textrm{MLP}(\vm_{1:T})$. The multi-layer perceptron (MLP) converts $\vz$ and $\vm$ to the same size.
    \item Concatenate for each time step, obtaining $\ve^{z,m}_{1:T}$, where $\ve^{z,m}_{i}=\textrm{Concat}(\ve^z_i, \ve^m_i)$
    \item Process $\ve^{z,m}_{1:T}$ via a shallow transformer, and obtain a mean pooling of its output.
    \item Obtain mean $ \boldsymbol{\mu} $ and standard deviation $ \Sigma $ by mapping the transformer's output via a fully connected neural layer. 
\end{itemize}

Next, we describe the decoding part and how to optimize the reconstruction loss term $\mathbb{E}_{q_\phi(\ve | \tau)}[\log p_\theta(\tau | \ve)]$. As a trajectory is a sequence of states and actions, we obtain the following result:
\begin{proposition} For any given environment, 
    ${\arg\max}_\theta \log p_\theta(\tau | \ve) = {\arg\max}_\theta \sum_t \log p_\theta(\va_t|\vx_t, \ve)$.
\end{proposition}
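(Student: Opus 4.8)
The plan is to exploit the factorization of the trajectory likelihood dictated by the MDP generative process in~\cref{eq:gen}, and then observe that the decoder parameters $\theta$ enter only through the action-prediction (policy) factor. The whole argument reduces to separating the $\theta$-dependent terms from the environment-determined terms in $\log p_\theta(\tau \mid \ve)$.

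First I would write out the joint density of a trajectory $\tau = \{(\vx_0,\va_0),(\vx_1,\va_1),\dots\}$ conditioned on the ability vector $\ve$. Using the Markovian sampling structure in~\cref{eq:gen} (initial state, then alternating action sampling and state transitions), the conditional likelihood factorizes as the product of the initial-state density $p_0(\vx_0)$, the per-step policy terms $p_\theta(\va_t \mid \vx_t, \ve)$, and the per-step transition terms $P(\vx_{t+1} \mid \vx_t, \va_t)$. Taking logarithms turns this product into a sum:
\[
\log p_\theta(\tau \mid \ve) = \log p_0(\vx_0) + \sum_t \log p_\theta(\va_t \mid \vx_t, \ve) + \sum_t \log P(\vx_{t+1} \mid \vx_t, \va_t)\textrm{.}
\]

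The decisive step is to invoke the phrase ``for any given environment'' in the statement: when the environment is fixed, the initial-state distribution $p_0$ and the dynamics $P$ are given quantities that are \emph{not} parameterized by the decoder weights $\theta$. The decoder only models the conditional action distribution, so $\theta$ appears exclusively in the middle sum $\sum_t \log p_\theta(\va_t \mid \vx_t, \ve)$. Consequently the first and third terms are additive constants with respect to $\theta$; subtracting constants does not move the location of a maximizer, so $\arg\max_\theta \log p_\theta(\tau \mid \ve) = \arg\max_\theta \sum_t \log p_\theta(\va_t \mid \vx_t, \ve)$, which is exactly the claim.

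I do not expect a genuine technical obstacle here, since the result is a direct consequence of the MDP factorization. The one point that must be made carefully—and which I would state explicitly as a modeling assumption before the computation—is that the decoder $p_\theta$ parameterizes only the policy $p_\theta(\va_t \mid \vx_t, \ve)$, while the transition kernel $P$ and the initial distribution $p_0$ are intrinsic to the (fixed) environment and carry no dependence on $\theta$. Once that assumption is in place, the separation of terms is immediate; the only bookkeeping care needed is to treat the trajectory length consistently (the product ranges over the time steps present in $\tau$, which is itself independent of $\theta$).
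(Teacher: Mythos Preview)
Your proposal is correct and follows exactly the same approach as the paper: factor $p_\theta(\tau\mid\ve)$ via the MDP generative process, take logs, and observe that only the policy terms $p_\theta(\va_t\mid\vx_t,\ve)$ depend on $\theta$ while $p_0$ and $P$ are environment constants. The paper's proof is a one-line version of what you wrote.
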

\begin{proof}
    The proof readily follows from the fact that $ p_\theta(\tau | \ve) = p(\vx_0) p_\theta(\va_0 | \vx_0 ,\ve) p(\vx_1|\vx_0, \va_0) p_\theta(\va_1 | \vx_1, \ve)\ldots$. Applying the log, the product becomes a sum and only the $p_\theta(\va_t|\vx_t, \ve)$ terms depend on $\theta$.
\end{proof}
With this result above, optimizing the reconstruction loss is exactly the same as behavioral cloning~\cite{pomerleau1988alvinn}. Thus, we use a neural network to parameterize the policy $p_\theta(a_t|x_t, \ve)$, and optimize $\sum_t \mathbb{E}_{q_\phi(\ve | \tau)}[\log p_\theta(a_t|x_t, \ve)]$ for both $\phi$ and $\theta$. This term is optimized for $\phi$ using the standard reparameterization trick from VAEs.

}

We refer to our approach as Variational Trajectory Encoding (VTE). Compared with the embedding from the mean pooling of the skill embeddings in~\cref{eq:pooling}, our approach shows better performance (see~\cref{sec:ablation}). Furthermore, we analyze the differences between trajectory embeddings produced by mean pooling and our method in the Appendix.



\subsection{Trajectory Embedding for Downstream Tasks}
In this section, we utilize the trajectory embedding obtained from VTE to address different downstream tasks: (1) Imitating trajectories from different expertise level policies (e.g., Expert, Good, Bad); (2) Classifying expertise level of the trajectory; and (3) Predicting return from a trajectory, without a reward label. 

\subsubsection{Imitating Trajectories of Varying Abilities}\label{sec:imit-ability}
In this task, we consider an offline dataset $\mathcal{D}$ consisting of  trajectories collected from a mixture of expert and non-expert policies. Recall the trajectory generation process described by~\cref{eq:gen}, where $\ve$ captures the intrinsic \emph{ability level} of the policy that generated this trajectory. We define this ability level more concretely (in Section~\ref{sec:experimentsetup}) as a range of returns of the policy that generated this trajectory. Our goal is to learn these policies $\pi(\cdot|\vx,\ve)$ that generate the trajectories.

To learn $\pi(\cdot|\vx,\ve)$, we utilize the recent IQ-Learn framework~\cite{garg2021iq} and modify it to a conditional version. The IQ-Learn approach maintains an actor $\pi$ and critic $Q$ neural network. We make the conditional version by introducing conditions into the actor and critic neural network. We have conditional actor, $\pi(\cdot|\vx, \ve)$, and critic, $Q(\vx,\va|\ve)$. Details are provided in~\cref{alg:iq} in the Appendix.

\subsubsection{Classification and Regression}
For classification task, we train a classifier based on multi-layer perceptron (MLP) to predict the ability level with the same dataset used in imitation learning task (see~\cref{sec:imit-ability}), i.e. we learn a mapping $f_{\textrm{CLS}}:\mathbb{R}^d\rightarrow \{1, ..., M\}$ from the trajectory embedding to its ability level, where $d$ is the dimension of the trajectory embedding and $M$ is the number of ability levels.
For regression task, we train an MLP $f_{\textrm{REG}}: \mathbb{R}^d\rightarrow \mathbb{R}$ to predict the return of the trajectory. 

Note that the labels (ability level and return) are inaccessible to our trajectory encoding algorithm; we include them here solely to evaluate embedding quality.
The classifier achieves high accuracy only if our method learns high-quality trajectory embeddings, but it will fail when the embeddings are of low quality. In the worst scenario, all embeddings are identical, and the classifier has to make random guesses. The same rationale applies to the regression task.

\section{Experiments}

In this section, we answer the following questions through experiments: (1) Is the learned trajectory embedding well-structured? (2) How effective is the trajectory embedding on downstream tasks? (3) Does the trajectory embedding exhibit specific properties? (4) How does a simple mean polling of skill embedding perform?  

We answer question (1) in~\cref{sec:structure} and question (2) in~\cref{sec:imitating,sec:cls}. Question (3) is extensively analyzed in~\cref{sec:property}, and finally an ablation experiment is done in Section~\ref{sec:ablation} for answering question (4).

\subsection{Experiment Setup} \label{sec:experimentsetup}

All experiments were run on NVIDIA Quadro RTX 6000 GPUs, CUDA 11.0 with Python version 3.7.12
in Pytorch 1.8.0.
Hyperparameter settings are in the appendix.

\noindent\textbf{Ability Level}\quad As mentioned in~\cref{sec:imit-ability}, our downstream tasks are trained on a dataset with varying ability levels. We define the ability level of a policy by the average return of its collected trajectories. For instance, a low-ability policy generates trajectories with returns being $(400\pm 100)$, while an expert-ability policy generates trajectories with returns being $(2000\pm 100)$. 

\noindent\textbf{Dataset}\quad Due to lack of a public dataset for our tasks, we generated the dataset ourselves. We selected three environments from MuJoCo~\cite{todorov2012mujoco}, \texttt{Hopper}, \texttt{Walker2D}, and \texttt{Half-Cheetah}. For each environment, we trained an RL agent using Soft Actor Critic (SAC) to the expert level, and saved checkpoints throughout the training. We then took three checkpoints to generate trajectories, corresponding to \textit{low}, \textit{medium}, and \textit{expert} ability levels, with 300 trajectories generated per ability level. The return information is provided in~\cref{tab:rewards}.


\begin{figure*}[t]
    \centering
    \hspace*{-0.05\linewidth}  
    \includegraphics[width=1.1\linewidth]{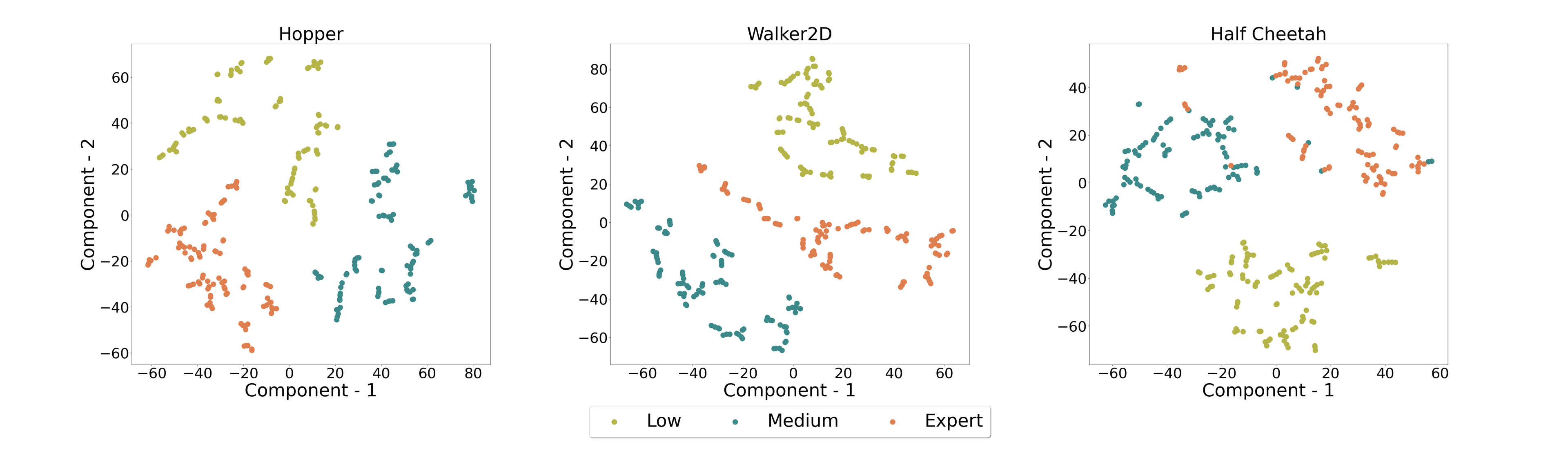}  
    \caption{tSNE Clustering Analysis}
    \label{fig:tSNE}
\end{figure*}

\begin{table}[t]
    \centering
    \caption{Dataset returns across different ability levels for each environment}
    \label{tab:rewards}
    \begin{tabular}{ c c c c }
        \toprule
        \textbf{Environment} & \textbf{Low} & \textbf{Medium} & \textbf{Expert} \\
        \midrule
        \textbf{Hopper} & $409.0 {\text{\color{gray} $\pm 4.9$}}$ & $885.3 {\text{\color{gray} $\pm 49.2$}}$ & $3206.8 {\text{\color{gray} $\pm 18.1$}}$ \\
        \textbf{Walker2D} & $3064.2 {\text{\color{gray} $\pm 43.3$}}$ & $4359.6 {\text{\color{gray} $\pm 51.1$}}$ & $5912.2 {\text{\color{gray} $\pm 25.3$}}$ \\
        \textbf{Half Cheetah} & $2402.9 {\text{\color{gray} $\pm 21.2$}}$ & $4197.6 {\text{\color{gray} $\pm 54.0$}}$ & $6321.9 {\text{\color{gray} $\pm 61.3$}}$ \\
        \bottomrule
    \end{tabular}
\end{table}

\begin{table*}[t]
    \centering
    \caption{Relative L2 norm difference between the learned Returns and the target Returns. }
    
    \begin{tabular}{ c c c c c c}
        \toprule
        \textbf{Environment}  & \textbf{Known-Abl} & \textbf{VTE-MLP} & \textbf{GCPC} & \textbf{GCPC-NR} & \textbf{VTE} \\
        \midrule
        Hopper    & $\underline{9.2} {\text{\color{gray} $\pm 16.7$}}$  & $24.6 {\text{\color{gray} $\pm 20.5$}}$ & $21.3 {\text{\color{gray} $\pm 25.9$}}$ & $27.6 {\text{\color{gray} $\pm 22.5$}}$ & $\textbf{0.9{\text{\color{gray} $\pm 2.8$}}}$ \\
        Walker2D    & $\textbf{7.7{\text{\color{gray}$\pm 14.8$}}}$ & $45.3 {\text{\color{gray} $\pm 23.1$}}$ & $27.8 {\text{\color{gray} $\pm 18.3$}}$ & $46.2 {\text{\color{gray} $\pm 23.4$}}$ & $\underline{13.1}{\text{\color{gray} $\pm 20.3$}}$ \\
        Half Cheetah   & $\textbf{1.3{\text{\color{gray} $\pm 1.1$}}}$ & $19.1 {\text{\color{gray} $\pm 7.0$}}$ & $28.7 {\text{\color{gray} $\pm 20.0$}}$ &  $47.3 {\text{\color{gray} $\pm 24.9$}}$ & $\underline{1.5} {\text{\color{gray} $\pm 1.2$}}$ \\
        \bottomrule
    \end{tabular}
    \label{tab:l2}
\end{table*}

\noindent\textbf{Baselines}\quad GCPC~\cite{zeng2024goal} is the closest work to ours, which utilizes the encoder-decoder transformer to encode trajectories. However, it is designed for offline RL and thus relies on rewards in the dataset. We evaluate two versions of GCPC: the original, which includes rewards, referred to as \textbf{GCPC}, and an adapted version that removes rewards from the input, referred to as \textbf{GCPC-NR} (No Reward). 
We also modify our \ours{} framework by replacing the skill extractor $\textrm{SE-Logit}$ with an MLP, referred to as \textbf{\ours-MLP} (see~\cref{sec:method}). Additionally, we introduce a strong baseline where the downstream task has access to the ability levels. For the imitation learning task, we consider this baseline an upper bound, which we call \textbf{Known-Abl} (Known Ability).

{\color{black} 
\noindent\textbf{Evaluation Metrics}\quad For classification, we use accuracy as the performance metric. For other downstream tasks, we measure performance by calculating the relative L2 norm of the difference between the learned and dataset returns. This is done by averaging the L2 norm difference for each learned return and its corresponding target return, and then normalizing by the target return.
}
    

\subsection{\ours{} Generates Well-Structured Embeddings}
\label{sec:structure}

We perform clustering analysis using tSNE \cite{van2008visualizing} on the latent trajectory embeddings to verify whether different ability levels can be distinguished. As shown in Figure~\ref{fig:tSNE}, three distinct clusters emerge in each environment, corresponding to the low, medium, and expert ability levels, confirming that our method successfully learns the latent vectors that separate ability levels. We also provide the clustering results using Principal Component Analysis (PCA) in the appendix.

\subsection{\ours{} Enables Ability-Conditioned Imitating}
\label{sec:imitating}

We assess the imitation performance of our method across three environments: \texttt{Hopper}, \texttt{Walker2D}, and \texttt{Half-Cheetah}, each with three skill levels: low, medium, and expert. 
We extract the latent trajectory embeddings using our approach and then learn a policy conditioned on these vectors using conditional IQ-Learn. 
The objective is to match the target return of that trajectory as observed in data, rather than learning the optimal policy.

\cref{tab:l2} summarizes the results against the baselines using the relative L2 norm loss between the average evaluation return of the policy conditioned on a trajectory embedding and the return of that trajectory in the dataset, where a lower value is better. {\color{black} We have bolded the best results and underlined the second-best results for each environment, with all results expressed as percentages. Our method consistently outperforms both baselines in the imitation task, achieving performance comparable to the upper bound set by Known-Abl. Despite the fact that Known-Abl and GCPC utilize additional reward information, our method surpasses GCPC across all environments and demonstrates lower error rates than Known-Abl in the Hopper environment. } This demonstrates that our trajectory embedding significantly enhances imitation learning when the dataset consists of mixed ability levels.

{\color{black}
Figure~\ref{fig:hopper_train} also presents the evaluation of returns on Hopper. For each ability level, our method facilitates learning conditioned on the latent trajectory embeddings, achieving results close to the upper-bound of the Known-Abl. In contrast, the VTE-MLP shows instability during training.
}


\begin{figure*}[t]
    \centering
    \includegraphics[width=1.\linewidth]{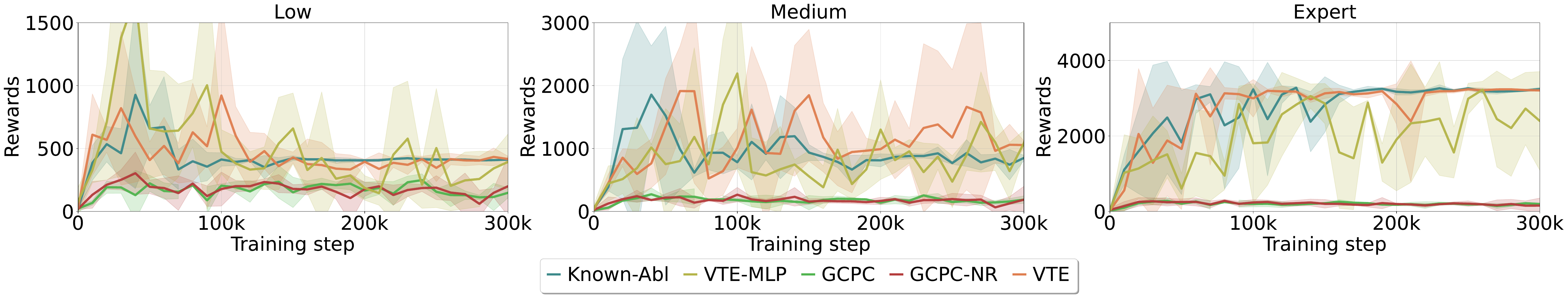}  
    \caption{Evaluation curve of returns on Hopper for different ability levels.}
    \label{fig:hopper_train}
\end{figure*}

\begin{figure*}[t]
    \centering
    \includegraphics[width=\linewidth]{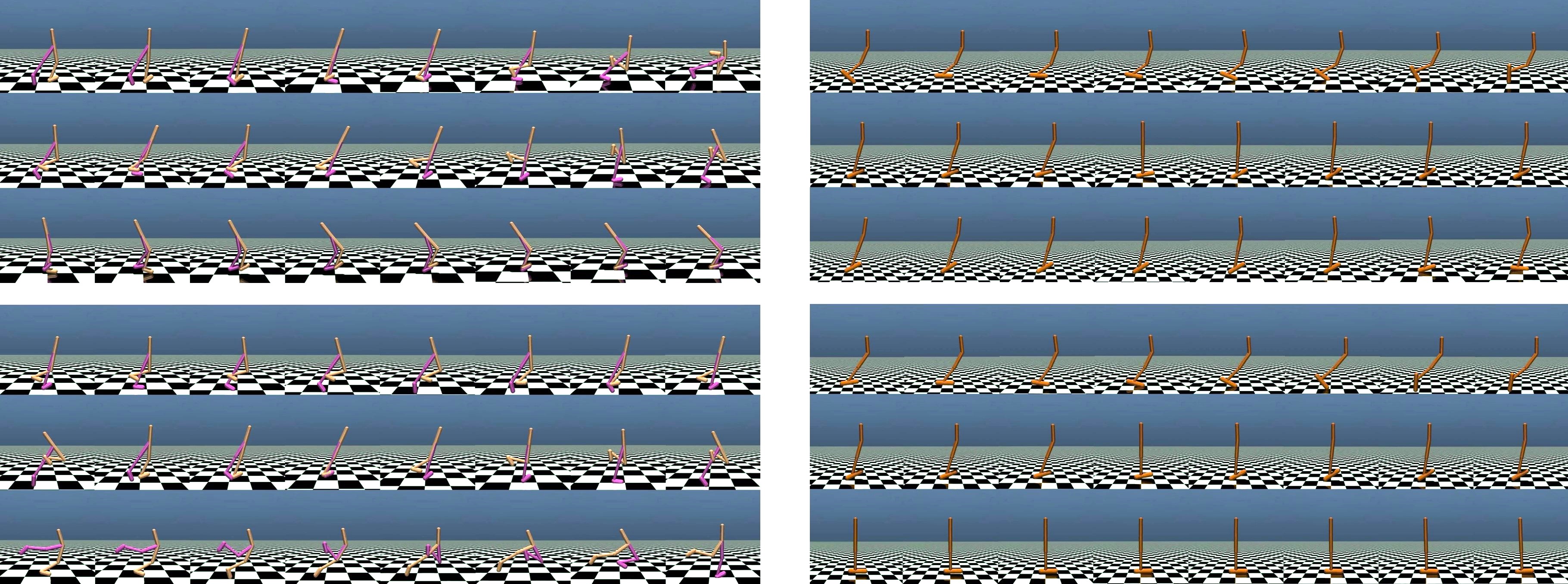}
    \caption{Overall visual comparison of change in behavior in \texttt{Walker2D} environment, presented in the left column, and \texttt{Hopper} environment presented in the right column, when a  dimension of the trajectory embedding is changed. Each of the four boxes is a perturbation result on one of the 10 vector dimensions of the trajectory embedding. Inside each box there are three different rows showing a sequence of frames. \textbf{First Row:} The value of the dimension is decreased. \textbf{Second Row:} The value is not perturbed as a control group. \textbf{Third Row:} The value of the dimension is increased (see more description in text).}
    \label{fig:video}
\end{figure*}

\subsection{\ours{} Facilitates Trajectory Classification and Regression}
\label{sec:cls}

{\color{black} We present the classification results in Table \ref{tab:classification} and the regression of rewards in Table \ref{tab:regression}. It is evident that VTE-MLP and VTE achieves 100\% classification accuracy within 80 epochs. For the regression task, the different methods appear to achieve similar results.}


\begin{table}[t]
    \centering
    \caption{Classification Accuracy on Ability Levels(\%).}
    \begin{tabular}{ c c c c }
        \toprule
        \textbf{Environment} & \textbf{GCPC-NR} & \textbf{VTE-MLP} & \textbf{VTE} \\
        \midrule
        Hopper & 34.2 & 100.0 & 100.0 \\
        Walker2D & 34.2 & 100.0 & 100.0 \\
        Half Cheetah & 32.9 & 100.0 & 100.0\\
        \bottomrule
    \end{tabular}
    \label{tab:classification}
\end{table}

\begin{table}[t]
    \centering
    \caption{Relative Regression Error on Rewards(\%).}
    \begin{tabular}{ c c c c }
        \toprule
        \textbf{Environment} & \textbf{GCPC-NR} & \textbf{VTE-MLP} & \textbf{VTE} \\
        \midrule
        Hopper & 3.8 & 2.7 & 3.1 \\
        Walker2D & 0.7 & 0.8 & 0.7 \\
        Half Cheetah & 0.9 & 0.7 & 0.8\\
        \bottomrule
    \end{tabular}
    \label{tab:regression}
\end{table}

\subsection{Property of Trajectory Embedding}
\label{sec:property}

\paragraph{Perturbed Conditions}
Figure~\ref{fig:video} visually compares behaviors in the \texttt{Walker2D} and \texttt{Hopper} environments, highlighting the distinct behaviors generated when specific dimensions of the trajectory embeddings are perturbed in opposing directions. Perturbing different dimensions produces varied behaviors. The figure highlights results for two dimensions in each environment. The left and right columns display results from the \texttt{Walker2D} and \texttt{Hopper} environments. In each box, we present the results of perturbing one dimension: the middle row represents the control group without perturbation, while the first and third rows show the effects of decreasing and increasing the value of that dimension, respectively. 

\begin{figure*}[t]
    \centering
    \hspace*{-0.03\linewidth}  
    \includegraphics[width=1.05\linewidth]{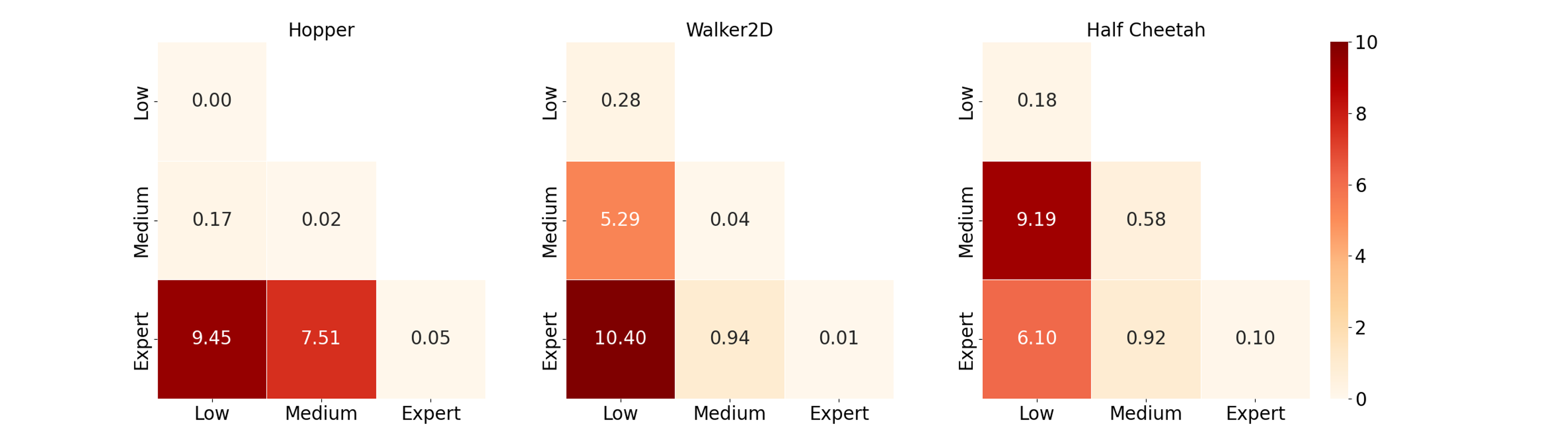}  
    \caption{Heatmap of Wasserstein distance (see definition in text) between distribution of trajectory embeddings for different ability levels.}
    \label{fig:distance}
\end{figure*}

In the top-left box, the \texttt{Walker2D} agent leans forward during walking when the value decreases and falls backward when it increases. In the bottom-left box, the agent walks leisurely with a decreased value but sprints with larger body movements when the value is increased. In the top-right box, the \texttt{Hopper} agent bends its leg with a reduced value but keeps it rigid when the value is increased. In the bottom-right box, the agent eagerly hops forward with a decreased value but remains still with an increased value.

These results clearly demonstrate that perturbing a single dimension in the latent trajectory embedding produces contrasting behaviors, illustrating the high degree of disentanglement in our learned representations.

\paragraph{Measure Distance Between Polices}
{\color{black}
We compute Wasserstein distances between policies at different ability levels, as shown in Figure~\ref{fig:distance}. The Wasserstein distance quantifies the minimum "cost" to transform one probability distribution into another. For each environment, we collect 10 trajectory embeddings to form an empirical distribution, representing the policy's ability level. We then calculate the Wasserstein distance between the distributions from different ability levels. For each policy, we also compare distributions within the same corresponding ability level to assess intra-level variability. The results demonstrate that distances within the same ability level are relatively small, while most distances between different ability levels are significantly larger. This indicates that the learned trajectory embeddings effectively distinguish between policies of different ability levels.
}

\begin{table}[t]
    \centering
    \caption{Range of the evaluation returns on Walker2D shown across ability levels for the returns seen in dataset, and for the conditional policies learned from embeddings of mean skill pooling and our VTE approach.}
    \label{tab:ablation_skill_pooling}
    \begin{tabular}{ c c c c }
        \toprule
        \textbf{Method} & \textbf{Low} & \textbf{Medium} & \textbf{Expert} \\
        \midrule
        \textbf{In Dataset} & $3064.2 {\text{\color{gray} $\pm 43.3$}}$ & $4359.6 {\text{\color{gray} $\pm 51.1$}}$ & $5912.2 {\text{\color{gray} $\pm 25.3$}}$ \\
        \textbf{Skill Pooling} & $1058.3 {\text{\color{gray} $\pm 752.3$}}$ & $776.0 {\text{\color{gray} $\pm 438.5$}}$ & $796.9 {\text{\color{gray} $\pm 561.7$}}$ \\
        \textbf{VTE} & $2045.9 {\text{\color{gray} $\pm 1101.9$}}$ & $3782.9 {\text{\color{gray} $\pm 1043.5$}}$ & $5053.0 {\text{\color{gray} $\pm 1834.4$}}$ \\
        \bottomrule
    \end{tabular}
\end{table}

\subsection{Ablation Experiment}\label{sec:ablation}
As mentioned in Section~\ref{sec:vte}, just a mean pooling of skills does not produce desired results. Here we present a result that shows the variation in returns (from imitating) based on mean pooled skill embedding and our VTE embeddings. The result in Table~\ref{tab:ablation_skill_pooling} on \textsc{Walker2D} show that the mean pooled skill embedding shows less variation in returns (across ability levels), thereby not able to learn the returns of low, medium, and expert ability levels.

\section{Conclusion}

In conclusion, this work introduces a novel unsupervised approach for encoding state-action trajectories into informative embeddings without the need for external reward or goal labels. The method leverages hierarchical skill abstraction and a transformer and VAE-based architecture to capture the temporal dynamics of trajectory skills. The resulting informative trajectory embedding demonstrates strong representation capabilities across various downstream tasks, including imitation learning, classification, clustering, and regression. Moreover, the disentangled nature of the learned embedding allows for intuitive control of agent behaviors and meaningful differentiation in the trajectory embedding space.

\section{Acknowledgement}
This research/project is supported by the National Research Foundation Singapore and DSO National Laboratories under the AI Singapore Programme (AISG Award No: AISG2-RP-2020-017) and the grant W911NF-24-1-0038 from the US Army Research Office.






\bibliographystyle{ACM-Reference-Format} 
\bibliography{reference}

\clearpage
\newpage
\appendix
\section{Conditional IQ-Learn Algorithm}\label{app:iq}


\begin{algorithm}
\caption{Conditional Inverse soft Q-Learning (adapted from~\citet{garg2021iq})}
\label{alg:iq}
\begin{algorithmic}[1]
\STATE Initialize Q-function $Q_\theta$, policy $\pi_\varphi$, and {\color{black}trajectory encoder} $E$
\WHILE{not converge}
   \STATE Reset the environment
   \STATE Sample a trajectory $\tau_i\sim \mathcal{D}$ and obtain its embedding $e_i=E(\tau_i)$
   \FOR{step $t$ in $\{1...T\}$}
       \STATE Train Q-function using obj. from Eq. 9 in~\citet{garg2021iq}:
       $$\theta_{t+1} \leftarrow \theta_t - \alpha_Q \nabla_\theta [-\mathcal{J}(\theta)]$$\\
       (Use $V^*$ for Q-learning and $V^{\pi_\varphi}$ for actor-critic)
       \STATE Improve policy $\pi_\varphi$ with SAC style actor update:
       \begin{align*}
           \varphi_{t+1} \leftarrow \varphi_t + \alpha_\pi \nabla_\varphi \mathbb{E}_{\vx\sim\beta, \va\sim\pi_\varphi(\cdot|\vx, \ve_i)}[&Q(\vx, \va| \ve_i) \\
           - &\log \pi_\varphi(\va|\vx, \ve_i)]
       \end{align*}
   \ENDFOR
\ENDWHILE
\end{algorithmic}
\end{algorithm}


\section{PCA Clustering Results}
We provide the clustering results using PCA in Figure \ref{fig:pca}.

\begin{figure*}[h]
    \centering
    \hspace*{-0.05\linewidth}  
    \includegraphics[width=1.1\linewidth]{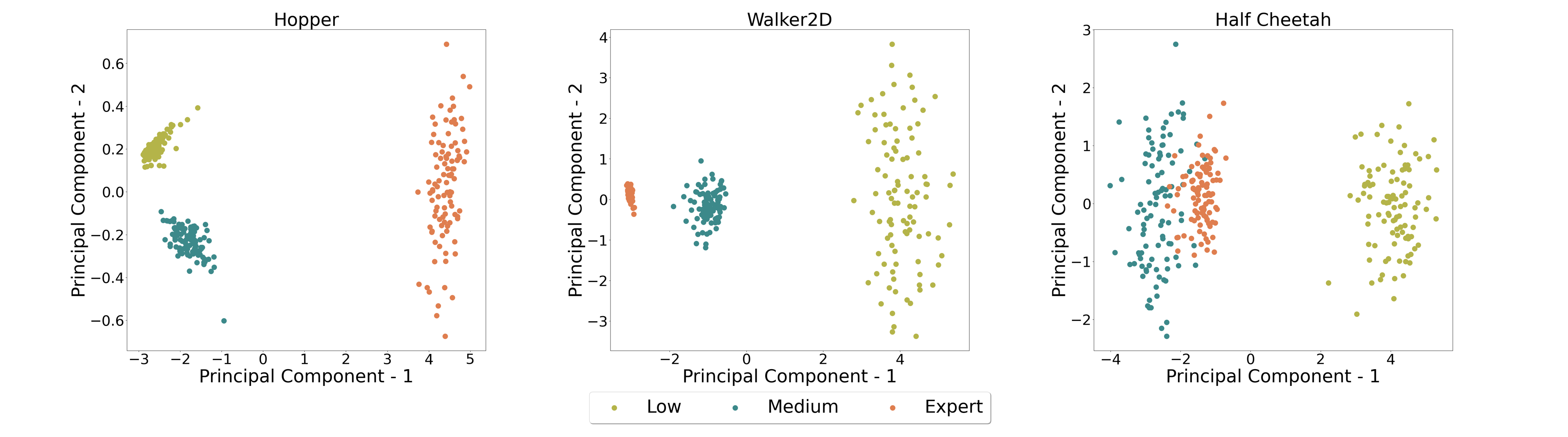}  
    \caption{Principal Component Analysis}
    \label{fig:pca}
\end{figure*}


\section{Training Graph}
We present the evaluation graphs for Walker2D and Half Cheetah respectively in Figure~\ref{fig:walker_train} and Figure~\ref{fig:cheetah_train}.

\begin{figure*}
    \centering
    \hspace*{-0.05\linewidth}  
    \includegraphics[width=1.1\linewidth]{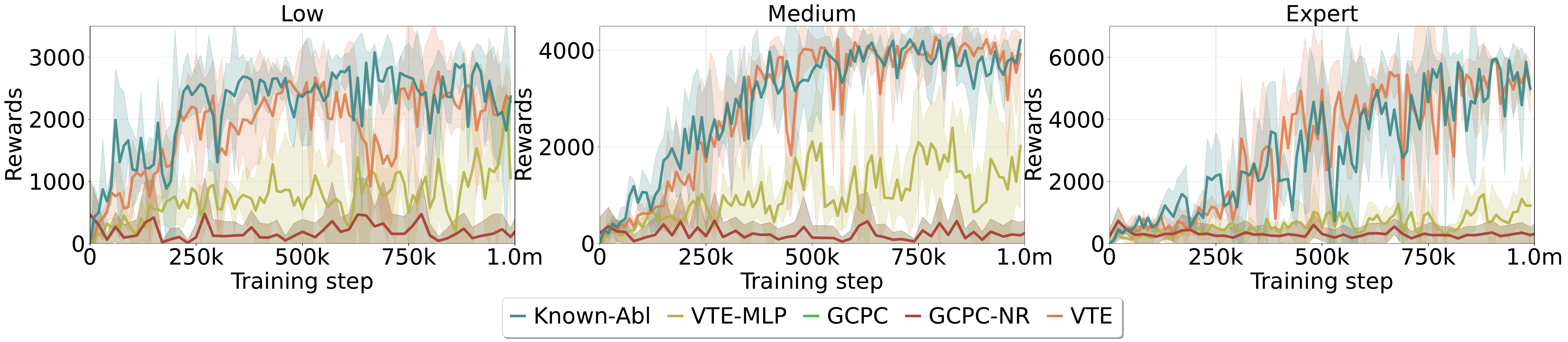}  
    \caption{Evaluation Graph on Walker2D}
    \label{fig:walker_train}
\end{figure*}

\begin{figure*}
    \centering
    \hspace*{-0.05\linewidth}  
    \includegraphics[width=1.1\linewidth]{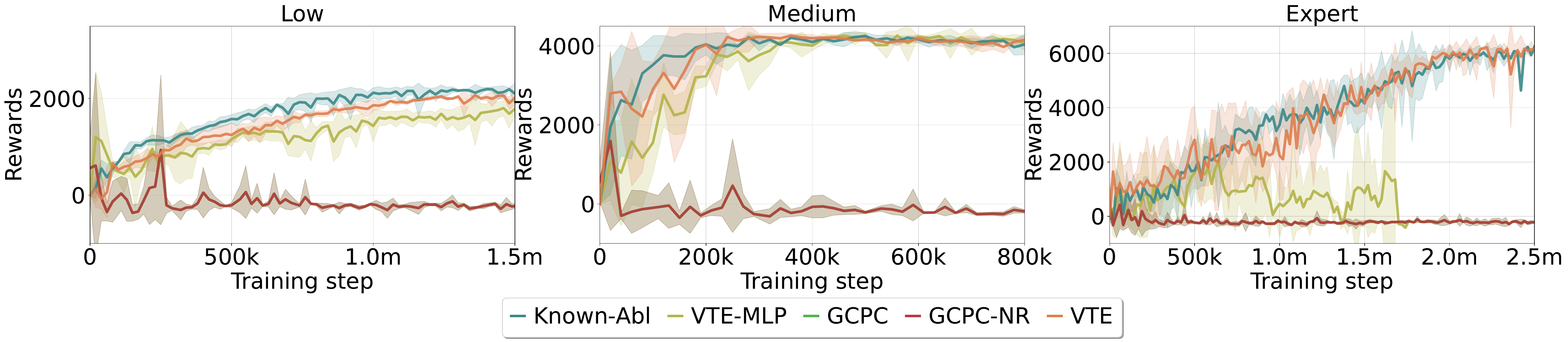}  
    \caption{Evaluation Graph on Half Cheetah}
    \label{fig:cheetah_train}
\end{figure*}

\section{Experiment Details}

\subsection{Optimization Details}
When learning skills through compression, the selection of hyperparameters has a limited effect on training outcomes. We assess performance by applying K-Means clustering to the mean pooling results of $\vz_{1:T}$. We use the Hierarchical State Space model proposed by Jiang et al.~\cite{jiang2022learning}, using a batch size of 16 and 200 training epochs. Early stopping is implemented once the clustering error falls below a sufficiently low threshold. This clustering error is determined by calculating the optimal assignment between the actual and predicted clusterings using the Hungarian algorithm. 
During Variational Trajectory Encoding, we adopt the hyperparameter settings outlined in Table~\ref{tab:vte_opt_details} for various environments. It is worth noting that bc\_alpha refers to the weight for the behavior cloning loss, which measures how closely the predicted actions match the expert actions, while kld\_alpha indicates the weight for the Kullback-Leibler divergence loss, which regularizes the learned latent distribution to resemble the prior distribution. We use a shallow transformer consisting of 4 attention blocks, each with a hidden size of 256, training the model for 500 epochs. We monitor the evolution of the trajectory embeddings and perform early stopping when the values show minimal variation.

\begin{table}[h]
    \centering
    \caption{Hyperparameter setting for VTE.}
    \begin{tabular}{ c c c }
        \toprule
        \textbf{Environment} & \textbf{bc\_alpha} & \textbf{kld\_alpha} \\
        \midrule
        Half Cheetah & 0.5 & 10 \\
        Hopper & 0.5 & 1.0 \\
        Walker2D & 0.5 & 1.0 \\
        \bottomrule
    \end{tabular}
    \label{tab:vte_opt_details}
\end{table}

For the downstream task of imitation learning, the total number of training timesteps for each environment is determined by the convergence of our model.  Generally, we utilize three different random seeds for each setting unless stated otherwise. Additional information can be found in Table \ref{tab:imitation_opt_details}. It is important to note that actor\_lr and critic\_lr indicates the learning rate of the actor and critic respectively, init\_temp refers to the initial temperature which controls the entropy of the policy, and batch size denotes the batch size during training updates.

\begin{table}[h]
    \centering
    \caption{Optimization details for imitation learning.}
    \begin{tabular}{ c c c c c }
        \toprule
        \textbf{Environment} & \textbf{actor\_lr} & \textbf{critic\_lr} & \textbf{init\_temp} & \textbf{batch size} \\
        \midrule
        Half Cheetah & 5e-5 & 1e-4 & 1e-12 & 32 \\
        Hopper & 3e-5 & 3e-5 & 1e-12 & 32 \\
        Walker2D & 2e-4 & 3e-5 & 5e-3 & 32 \\
        \bottomrule
    \end{tabular}
    \label{tab:imitation_opt_details}
\end{table}

\subsection{tSNE Clustering Analysis Parameters}
We employ the t-SNE implementation from scikit-learn with perplexity=3 and init=ramdom, while retaining default settings for all other parameters.
\subsection{Training Time}
Table~\ref{tab:training_time} shows our approach has similar training cost compared with baselines.
\begin{table}[h]
    \centering
    \caption{Average Training Time (hours) Across Various Environments.}
    \begin{tabular}{ c c c c }
        \toprule
        \textbf{Method} & \textbf{GCPC-NR} & \textbf{VTE-MLP} & \textbf{VTE} \\
        \midrule
        Training Time & 2 & 3 & 2.5 \\
        \bottomrule
    \end{tabular}
    \label{tab:training_time}
\end{table}

\section{Additional Results}

\subsection{Ablation Experiment}
For a given trajectory, the skill logits across all timesteps are averaged to produce the embedding, as described in~\cref{eq:pooling}. While this approach is straightforward, we hypothesize that the sequential structure of the skills is crucial for generating effective trajectory embeddings, which cannot be adequately captured by averaging. In our approach, we use a transformer within the encoder of the VAE, enabling us to incorporate the sequence information of the skills. 
The results in Table~\ref{tab:ablation_skill_pooling} of the paper show that mean pooling performs worse than our approach on the imitating task. Additionally, we observe that the trajectory embeddings of the naive approach lack diversity across the dimensions, suggesting that the embedding space is not fully utilized. This underutilization likely contributes to the inferior performance of mean pooling. Below, we provide some sample embeddings generated by both approaches.

\noindent\textbf{Mean pooling:} \\
Emb 1: 0.86, 0.85, 0.87, 0.86, 0.87, 0.88, 0.85, 0.87, 0.90, 0.85 \\
Emb 2: 0.83, 0.82, 0.83, 0.83, 0.83, 0.84, 0.82, 0.83, 0.86, 0.82 \\
Emb 3: 0.83, 0.82, 0.82, 0.83, 0.83, 0.84, 0.81, 0.83, 0.86, 0.82

\noindent\textbf{Ours: } \\
Emb 1: -0.84, 0.86, -0.97, -0.91, 1.05, 1.10, 0.88, 0.74, -0.79, 1.12\\
Emb 2: -0.83, 0.33, -0.57, -0.59, 0.68, 0.41, 0.94, 0.65, -0.33, 0.55\\
Emb 3: -0.86, 0.59, -0.78, -0.75, 0.87, 0.72, 0.92, 0.69, -0.57, 0.82

\subsection{Additional Environments}
 We conducted experiments on Pusher and HumanoidStandup. The results for the classification and regression tasks are in Table~\ref{tab:classification2} and Table~\ref{tab:regression2}. The performance on the imitation tasks in these two environments is suboptimal and we will include them in the future work.
\begin{table*}[h]
    \centering
    \caption{Classification Accuracy on Ability Levels(\%).}
    \begin{tabular}{ c c c c }
        \toprule
        \textbf{Environment} & \textbf{GCPC-NR} & \textbf{VTE-MLP} & \textbf{VTE} \\
        \midrule
        Pusher & 34.2 & 97.6 & 98.8 \\
        HumanoidStandup & 36.6 & 100.0 & 100.0 \\
        \bottomrule
    \end{tabular}
    \label{tab:classification2}
\end{table*}
\begin{table*}[h]
    \centering
    \caption{Relative Regression Error on Rewards(\%).}
    \begin{tabular}{ c c c c }
        \toprule
        \textbf{Environment} & \textbf{GCPC-NR} & \textbf{VTE-MLP} & \textbf{VTE} \\
        \midrule
        Pusher & 2.4 & 2.4 & 2.4 \\
        HumanoidStandup & 2.4 & 1.6 & 1.6 \\
        \bottomrule
    \end{tabular}
    \label{tab:regression2}
\end{table*}

\end{document}